\newcolumntype{R}[1]{>{\RaggedLeft\arraybackslash}p{#1}}
\newcommand{\tuple}[1]{\ensuremath{\langle {#1} \rangle}}
\newcommand{\notes}[1]{}
\theoremstyle{definition}
\theoremstyle{plain}
\newtheorem{theorem}{Theorem}
\newcommand{\ith}[1]{\ensuremath{i^{{th}}}}
\newcommand{\ngram}{\ensuremath{\text{$n$-gram}}\xspace}
\def\permdot#1#2{
\permx=#1 \advance\permx by-1
\permy=#2 \advance\permy by-1
\psframe[fillcolor=black, fillstyle=solid]
(\permx,\permy)(#1, #2)
}
\newcommand{\argmax}{\operatornamewithlimits{\mathbf{argmax}}}
\newcommand{\toptop}{\operatornamewithlimits{\mathbf{top}}}
\newcommand{\startsym}{\mbox{\scriptsize \texttt{<s>}}\xspace}
\newcommand{\stopsym}{\mbox{\scriptsize \texttt{</s>}}\xspace}
\newcommand{\boxnum}[1]{{\setlength{\fboxsep}{1pt}\raisebox{1pt}{\hspace{1pt}\fbox{\tiny #1}\hspace{1pt}}}}
\newcommand{\ind}[1]{\ensuremath{_{\kern-0.5pt\boxnum{#1}}}}
\newcommand{\vecx}{\ensuremath{\mathbf{x}}\xspace}
\newcommand{\vecy}{\ensuremath{\mathbf{y}}\xspace}
\newcommand{\smallnt}[1]{\ensuremath{_{\mbox{\tiny PP}}}\xspace}
\newcommand{\pseudocode}{Algorithm}
\newcommand{\defeq}{\ensuremath{\stackrel{\Delta}{=}}\xspace}
\newcommand{\score}{\ensuremath{\mathit{sc}}\xspace}
\newcommand{\pytorch}{PyTorch\xspace}
\newcommand{\rnnsearch}{RNNsearch\xspace}
\newcommand{\bestuptoi}{\ensuremath{\mathit{best}_{\leq i}}\xspace}
\newcommand{\completed}{\ensuremath{\mathit{comp}}\xspace}
\def\namecite{\newcite}
\def\blfootnote{\gdef\@thefnmark{}\@footnotetext}
\title{When to Finish? Optimal Beam Search for Neural Text Generation (modulo beam size)}
\author{Liang Huang \and Kai Zhao$^\dagger$ \and Mingbo Ma\\
  School of Electrical Engineering and Computer Science\\
  Oregon State University\\
  Corvallis, Oregon, USA\\
  {\tt \{liang.huang.sh, kzhao.hf, cosmmb\}@gmail.com}
}
\begin{document}

\maketitle

\begin{abstract}
In neural text generation such as neural machine translation, summarization, and image captioning,
beam search is widely used to improve the output text quality.
However, 
in the neural generation setting, hypotheses can finish in different steps,
which makes it difficult to decide when to end beam search
to ensure optimality.
We propose a provably optimal beam search algorithm
that will always return the optimal-score complete hypothesis (modulo beam size),
and finish as soon as the optimality is established (finishing no later than the baseline).
To counter neural generation's tendency for shorter hypotheses,
we also introduce a bounded length reward mechanism which allows 
a modified version of our beam search algorithm to remain optimal.
Experiments on neural machine translation 
demonstrate that our principled beam search algorithm
leads to improvement in BLEU score over previously proposed alternatives.
\end{abstract}

\blfootnote{$^\dagger$ Current address: Google Inc., 
New York, NY, USA.}

\section{Introduction}
\label{sec:intro}

In recent years, neural text generation using recurrent networks
have witnessed rapid progress, quickly becoming the state-of-the-art paradigms 
in machine translation \cite{kalchbrenner+blunsom:2013,sutskever+:2014,bahdanau+:2014},
summarization \cite{rush+:2015,ranzato+:2016}, and
image captioning \cite{vinyals+:2015,xu+:2015}.
In the decoder of neural generation, beam search is widely employed
to boost the output text quality, often leading to
substantial improvement over greedy search (equivalent to beam size 1) in metrics such as BLEU or ROUGE;
for example, \namecite{ranzato+:2016}
reported +2.2 BLEU (on single reference) in translation
and +3.5 ROUGE-2 in summarization, both using a beam of 10.
Our own experiments on machine translation (see Sec.~\ref{sec:exps})
show +4.2 BLEU (on four references) using a beam of 5.

However, unlike traditional beam search in phrase-based MT or shift-reduce parsing
where all hypotheses finish in the same number of steps,
here in neural generation, hypotheses can finish in vastly different numbers of steps.
Once you find a completed hypothesis (by generating the \stopsym symbol),
there are still other active hypotheses in the beam that can continue to grow,
which might lead to better scores.
Therefore when can you end the beam search? 
How (and when) can you guarantee that the returned hypothesis has the optimal score
modulo beam size?

There have not been satisfying answers to these questions, 
and existing beam search strategies are heuristic methods that do not guarantee optimality. 
For example, the widely influential \rnnsearch \cite{bahdanau+:2014} employs a ``shrinking beam'' method:
once a completed hypothesis is found, beam size shrinks by 1,
and beam search would finish if beam size shrinks to 0 or if the number of steps hits a hard limit.
The best scoring completed hypothesis among all completed ones encountered so far is returned.
On the other hand, OpenNMT \cite{opennmt}, whose PyTorch version will be the baseline in our experiments,
uses a very different strategy: beam search terminates whenever the highest-ranking hypothesis in the current step is completed
(which is also the one returned), without considering any other completed hypotheses.
Neither of these two methods  guarantee optimality of the returned hypothesis.

We therefore propose a novel and simple beam search variant
that will always return the optimal-score complete hypothesis (modulo beam size),
and finish as soon as the optimality is established.
However, another well-known problem remains, that
the generated sentences are often too short, compared to previous paradigms such as SMT \cite{shen+:2016}.
To alleviate this problem, previous efforts introduce length normalization (as a switch in RNNsearch)
or length reward \cite{he+:2016} borrowed from SMT \cite{koehn+:2007}.
Unfortunately these changes will invalidate the optimal property of our proposed algorithm.
So we introduce a {\em bounded} length reward mechanism which allows 
a modified version of our beam search algorithm to remain optimal.
Experiments on neural machine translation 
demonstrate that our principled beam search algorithm
leads to improvement in BLEU score over previously proposed alternatives.


\section{Neural Generation and Beam Search}
\label{sec:prelim}

Here we briefly review neural text generation 
and then review existing  beam  search algorithms.

Assume the input sentence, document, or image is
embedded into a vector \vecx, from which we generate the output sentence \vecy which is a {\em completed} hypothesis:\footnote{For simplicity reasons we do not discuss bidirectional LSTMs and attentional mechanisms here 
but our algorithms still work with those encoders (we have tested them).}
\begin{align*}
  \vecy^* &= \argmax_{\vecy: \completed(\vecy)} p( \vecy \mid \vecx) \\
          &= \argmax_{\vecy: \completed(\vecy)} \prod_{i\leq |\vecy|} p(y_i \mid \vecx, \vecy_{<i})
\end{align*}
where $\vecy_{<i}$ is a popular shorthand notation for the prefix $y_0 y_1 ... y_{i-1}$.
We say that a hypothesis \vecy is {\bf completed}, notated $\completed(\vecy)$, if its last word is \stopsym, i.e.,
\[
        \completed(\vecy) \defeq (\vecy_{|\vecy|} = \stopsym)
\]
in which case it will not be further expanded.

A crucial difference in RNN-based neural generation compared to previous paradigms such as phrase-based MT
is that we no longer decompose $p(y_i \mid \vecx, \vecy_{<i})$ 
into the translation model, $p(y_i \mid \vecx)$, and the language model, $p(y_i \mid \vecy_{<i})$, 
and more importantly, we no longer approximate the latter by \ngram models.
This ability to model arbitrarily-lengthed history using RNNs is an important reason for NMT's substantially improved fluency
compared to SMT.

To (approximately) search for the best output $\vecy^*$,
we use beam search, where the beam $B_i$ at step $i$ is an {\it ordered} list of size (at most) $b$, 
and expands to the next beam $B_{i+1}$ of the same size:
\begin{align*}
B_0\! &= \![\tuple{\startsym, \ p(\startsym \mid \vecx)}] \\
B_i\! &= \!\toptop^b  
     \{\tuple{\vecy'\!\!\circ y_i, \ s\!\cdot\! p(y_i | \vecx, \vecy)} \mid \tuple{\vecy'\!, s} \in B_{i-1} \}
\end{align*}
where the notation $\toptop^b S$ selects the top $b$ scoring items from the set $S$,
and each item is a pair $\tuple{\vecy, s}$ where \vecy is the current prefix and $s$ is its accumulated score
(i.e., product of probabilities).

\section{Optimal Beam Search {\normalfont \small (modulo beam size)}}
\label{sec:beam}
We propose a very simple method to optimally finish beam search,
which guarantees the returned hypothesis is the highest-scoring 
completed hypothesis modulo beam size;
in other words, we will finish as soon as an ``optimality certificate''
can be established that future hypotheses will never score better
than the current best one.

Let \bestuptoi be the {\em best completed hypothesis so far} up to step $i$, i.e.,
\begin{equation}
\bestuptoi \defeq \max \{ \vecy \in \cup_{j\leq i} B_j \mid \completed(\vecy) \}
\end{equation}
We update it every time we find a completed hypothesis (if there is none yet, then it remains undefined).
Now at any step $i$, if \bestuptoi is defined, and the highest scoring item $B_{i,1}$ in the current beam $B_i$
scores worse than or equal to \bestuptoi, i.e., when
\begin{equation}
B_{i,1} \leq \bestuptoi
\end{equation}
we claim the optimality certificate is established, and terminate beam search, returning \bestuptoi
(here smaller means worse, since we aim for the highest-probability completed hypothesis).

\begin{theorem}[optimality]
When our beam search algorithm terminates,
the current best completed hypothesis (i.e., \bestuptoi) is the highest-probability completed hypothesis (modulo beam size).
\end{theorem}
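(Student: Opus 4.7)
The plan is to leverage a simple monotonicity property of the scoring function: because each step's accumulated score is a product of probabilities in $[0,1]$, scores are monotonically non-increasing as a hypothesis grows. I would fix the step $i$ at which the termination condition $B_{i,1} \leq \bestuptoi$ first fires and show that no completed hypothesis produced at any later step $j > i$ can beat $\bestuptoi$.

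First, I would observe by a one-line induction on $j - i$ using the recursive definition $B_j = \toptop^b \{\tuple{\vecy'\!\circ y_j, s\cdot p(y_j \mid \vecx, \vecy')} \mid \tuple{\vecy'\!, s}\in B_{j-1}\}$ that every item in a later beam $B_j$ is a descendant, via repeated beam expansion, of some item $\tuple{\vecy'\!, s'}\in B_i$. Next, since each expansion multiplies the accumulated score by a factor $p(y_k \mid \vecx, \vecy_{<k}) \in [0,1]$, any descendant of $\tuple{\vecy'\!, s'}$ has score no greater than $s'$, which in turn is at most $B_{i,1}$. Chaining these inequalities with the termination condition gives that every completed hypothesis produced after step $i$ scores at most $\bestuptoi$.

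Finally, combining this bound with the definition of $\bestuptoi$ as the best completed hypothesis discovered in $\cup_{k\leq i} B_k$, I would conclude that $\bestuptoi$ dominates every completed hypothesis that the beam search (of width $b$) can ever encounter, which is exactly the optimality claim.

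The main subtlety, rather than an obstacle, is the ``modulo beam size'' qualifier: the theorem does not assert that $\bestuptoi$ equals the global optimum $\vecy^*$ of the underlying exact decoding problem, but only that it is optimal among the completed hypotheses reachable under the fixed beam width $b$. I would state this explicitly at the beginning of the proof so that the quantifier ``over all completed hypotheses produced by this beam search'' is unambiguous, and so that the inductive step is understood as referring to the beam-restricted search space rather than the full search space.
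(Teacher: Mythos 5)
Your proposal is correct and follows essentially the same argument as the paper's own proof: every item in the terminating beam scores at most $B_{i,1} \leq \bestuptoi$, and since each expansion multiplies by a probability $\leq 1$, all descendants in future beams can only score worse. You simply make explicit the induction over later beams and the ``modulo beam size'' quantifier that the paper leaves implicit.
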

\begin{proof}
If $B_{i,1} \leq \bestuptoi$ then $B_{i,j} \leq B_{i,1} \leq \bestuptoi$ for all items $B_{i,j}$ in beam $B_i$.
Future descendants grown from these items will only be no better, since probability $\leq 1$,
so all items in current and future steps are no better than \bestuptoi.
\end{proof}

\begin{theorem}[early stopping]
Our beam search algorithm terminates no later than OpenNMT's termination criteria
(when $B_{i,1}$ is completed).
\end{theorem}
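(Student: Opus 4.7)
The plan is to show that whenever OpenNMT's stopping condition fires at some step $i$, our algorithm's stopping condition $B_{i,1} \leq \bestuptoi$ also holds at that same step $i$ (and possibly has fired at an earlier step). Since both algorithms march through the same sequence of beams $B_0, B_1, \ldots$, this immediately implies that our algorithm terminates no later than OpenNMT's.

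Concretely, I would proceed as follows. Let $i$ be the first step at which OpenNMT would terminate, so by definition $B_{i,1}$ is a completed hypothesis (its last token is \stopsym). First I would observe that $B_{i,1}$ is then a member of the set $\{\vecy \in \cup_{j \leq i} B_j \mid \completed(\vecy)\}$ used in the definition of \bestuptoi. Hence \bestuptoi is defined at step $i$, and by its defining maximum we have $\bestuptoi \geq B_{i,1}$, i.e., $B_{i,1} \leq \bestuptoi$. This is exactly our stopping criterion at step $i$, so our algorithm terminates at step $i$ at the latest. If our criterion had already been triggered at some earlier step $j < i$, we would have terminated even sooner; either way, termination happens no later than OpenNMT's.

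There is essentially no hard step: the argument is just unfolding the definition of \bestuptoi and noting that a completed top-of-beam item is trivially a witness that makes our inequality hold. The only subtlety worth flagging explicitly is that $\bestuptoi$ is guaranteed to be \emph{defined} at the OpenNMT termination step (so that the inequality even makes sense), which is precisely what the membership $B_{i,1} \in \{\vecy \in \cup_{j\leq i} B_j \mid \completed(\vecy)\}$ supplies. I would keep the proof to two or three sentences of prose without any calculation.
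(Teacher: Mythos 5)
Your proposal is correct and follows exactly the paper's argument: when $B_{i,1}$ is completed it becomes a witness in the set defining \bestuptoi, so $\bestuptoi \geq B_{i,1}$ and our stopping criterion $B_{i,1} \leq \bestuptoi$ holds at that same step. The only difference is that you spell out the (worthwhile) detail that this membership also guarantees \bestuptoi is defined, which the paper leaves implicit.
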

\begin{proof}
When $B_{i,1}$ is itself completed, $\bestuptoi=\max\{B_{i,1}, \cdots\} \geq B_{i,1}$,
so our stopping criteria is also met.
\end{proof}

This above Theorem shows that our search is stopping earlier once the optimality certificate is established, exploring fewer items than OpenNMT's default search.
Also note that the latter, even though exploring more items than ours,
still can return suboptimal solutions; e.g.,
when $B_{i,1}$ is worst than \bestuptoi (they never stored \bestuptoi).
In practice, we noticed our search finishes about 3--5 steps earlier than OpenNMT at a beam of 10,
and this advantage widens as beam size increases, although the overall speedup is not too noticeable,
given the target language sentence length is much longer.
Also, our model scores (i.e., log-probabilities) are indeed better (see Fig.~\ref{fig:score-beam}),
where the advantage is also more pronounced with larger beams
(note that OpenNMT baseline is almost flat after $b=10$, while our optimal beam search still steadily improves).
Combining these two Theorems, it is interesting to note that our method is not just optimal but also faster.

\begin{figure}
\centering\hspace{-0.3cm}\includegraphics[width=0.51\textwidth]{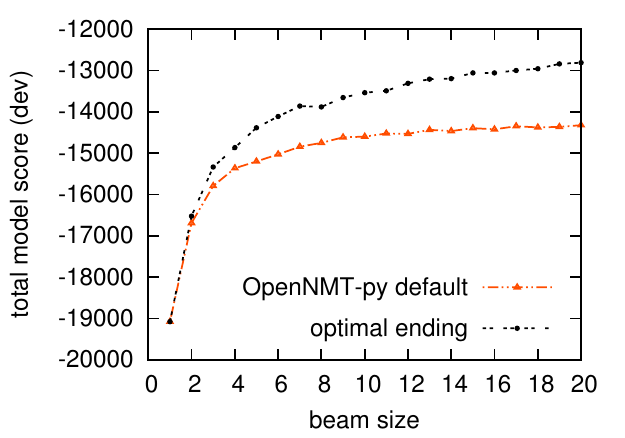} 
\caption{
Comparison between optimal beam search and OpenNMT-py's default search,
in terms of search quality (model score, $\uparrow$ is better).
\label{fig:score-beam}}
\end{figure}

\section{Optimal Beam Search for Bounded Length Reward}
\label{sec:length}
However, optimal-score hypothesis, though satisfying in theory,
is not ideal in practice, since neural models
are notoriously bad in producing very short sentences, 
as opposed to older paradigms such as SMT \cite{shen+:2016}.
To alleviate this problem, two methods have been proposed:
(a) length normalization, used in \rnnsearch as an option,
where the revised score of a hypothesis is divided by its length, thus favoring longer sentences;
and (b) explicit length reward \cite{he+:2016} borrowed from SMT,
rewarding each generated word by a constant tuned on the dev set.

Unfortunately, each of these methods breaks the optimality proof of our beam search algorithm in Section~\ref{sec:beam},
since a future hypothesis, being longer, might end up with a higher (revised) score.
We therefore devise a novel mechanism called ``bounded length reward'',
that is, we reward each word until the length of the hypothesis is longer than the ``estimated optimal length''.
In machine translation and summarization, this optimal length $l$ can be $\mathit{ratio} \cdot |\vecx|$ where 
$|\vecx|$ is the source sentence length, and $\mathit{ratio}$ is the 
average ratio of reference translation length over source sentence length on the dev set
(in our Chinese-to-English NMT experiments, it is 1.27 as the English side is a bit longer).
Note that we use the same $\mathit{ratio}$ estimated from dev on test,
assuming that the optimal length ratio for test (which we do not know) 
should be similar to those of dev ones.
We denote $\tilde{\score}({\vecy})$ to be the revised score of hypothesis $\vecy$ with the bounded length reward, i.e.,
\[
\tilde{\score}({\vecy}) \defeq \score(\vecy) + r \cdot \min \{l, |\vecy|\}.
\]
We also define $\tilde{\bestuptoi}$ to be the revised version of \bestuptoi that optimizes the revised instead of the original score, i.e.,
\[
\displaystyle\tilde{\bestuptoi} \defeq \argmax_{\vecy \in \cup_{j\leq i} B_j, \completed(\vecy)} \tilde{\score}(\vecy)
\]

Now with bounded length reward, we can modify our beam search algorithm a little bit 
and still guarantee optimality.
First we include in the revised cost a reward $r$ for each generated word,
as long as the length is less than $l$, the estimated optimal length.
If at step $i$, the highest scoring item $B_{i,1}$'s revised score (i.e., including bounded length reward) 
plus the heuristic ``future'' extra length reward of a descendant,
\(r\cdot \max \{l-i, 0\}\),
is worse than (or equal to) the similarly revised version of \bestuptoi, i.e.,
\begin{equation}
\label{eq:comp}
\tilde{\score}({B}_{i,1}) + r\cdot \max\{l-i, 0\} \leq  \tilde{\score}(\tilde{\bestuptoi})
\end{equation}
at which time we claim the revised optimality certificate is established,
and terminate the beam search and return $\tilde{\bestuptoi}$.

Actually with some trivial math we can simplify the stopping criteria to
\begin{equation}
\label{eq:simp}
        \score(B_{i,1}) + r\cdot l \leq \tilde{\score}({\tilde{\bestuptoi}}).
\end{equation}
This much simplified but still equivalent criteria can speed up decoding in practice,
since this means we actually do not need to compute the revised score for every hypothesis in the beam;
we only need to add the bounded length reward when one is finished (i.e., when updating 
${\tilde{\bestuptoi}}$), and the simplified criteria only compares it with the original score of a hypothesis
plus a constant reward $r\cdot l$.

\begin{theorem}[modified optimality]
Our modified beam search returns the highest-scoring completed hypothesis 
where the score of an item is its log-probability plus a bounded length reward.
\end{theorem}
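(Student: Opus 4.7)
The plan is to mirror the proof of Theorem~1, with the revised score $\tilde{\score}$ playing the role of the original score. The crucial new ingredient is that the bounded length reward saturates at $r \cdot l$, so even though longer hypotheses collect more reward, the extra reward any future descendant can accumulate is capped.

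First I would verify the equivalence claimed in the paper between the original stopping criterion and its simplified form, since the simplified form is what I will actually reason with. Writing $|B_{i,1}| = i$ and splitting into the cases $i \leq l$ and $i > l$, a short calculation shows $r \cdot \min\{l, i\} + r \cdot \max\{l-i, 0\} = r \cdot l$ in both cases, so $\tilde{\score}(B_{i,1}) + r\cdot\max\{l-i,0\} = \score(B_{i,1}) + r \cdot l$ as claimed.

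Next I would argue that, at termination, every item in the current beam $B_i$ and every completed descendant of those items has revised score no better than $\tilde{\score}(\tilde{\bestuptoi})$. For any $j$, $\score(B_{i,j}) \leq \score(B_{i,1})$ because the beam is sorted; for any (completed) descendant $\vecy'$ of $B_{i,j}$, $\score(\vecy') \leq \score(B_{i,j})$ because each additional step multiplies by a probability $\leq 1$ (equivalently adds a non-positive log-probability); and the bounded length reward contributes at most $r \cdot l$ to $\tilde{\score}(\vecy')$ regardless of how long $\vecy'$ grows, since $\min\{l, |\vecy'|\} \leq l$. Chaining these,
\[
\tilde{\score}(\vecy') \;\leq\; \score(\vecy') + r \cdot l \;\leq\; \score(B_{i,1}) + r \cdot l \;\leq\; \tilde{\score}(\tilde{\bestuptoi}),
\]
where the last inequality is exactly the simplified stopping criterion.

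Finally, combining this with the definition of $\tilde{\bestuptoi}$ as the highest-revised-score completed hypothesis found so far, we conclude that $\tilde{\bestuptoi}$ is the overall highest-revised-score completed hypothesis modulo beam size, and the algorithm correctly returns it. The main obstacle is the length bookkeeping: one must be careful that the cap $r \cdot l$ really does bound the length reward of arbitrarily long future descendants (not just those reaching exactly length $l$), and that the equivalence between the two stopping criteria is exact rather than merely an upper bound — otherwise the algorithm might stop too early in the boundary case $i = l$.
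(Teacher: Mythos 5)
Your proof is correct and follows essentially the same route as the paper, which disposes of this theorem with the one-line justification ``by admissibility of the heuristic'': your argument is precisely the verification that $r\cdot\max\{l-i,0\}$ is an admissible (never-underestimating) bound on the extra bounded length reward any descendant can still collect, combined with the fact that log-probabilities only decrease. Along the way you also supply the short calculation behind the paper's Theorem~4 (the equivalence of Eq.~\ref{eq:comp} and Eq.~\ref{eq:simp}), which the paper labels ``trivial''; your write-up is a faithful expansion of both proofs rather than a different argument.
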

\begin{proof}
by admissibility of the heuristic.
\end{proof}

\begin{theorem}[correctness of the simplified criteria]
Eq.~\ref{eq:simp} is equivalent to Eq.~\ref{eq:comp}.
\end{theorem}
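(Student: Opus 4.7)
The plan is to unfold the definition of $\tilde{\score}$ on the left-hand side of Eq.~\ref{eq:comp} and show that the two length-dependent terms telescope to the constant $r \cdot l$, at which point Eq.~\ref{eq:comp} becomes Eq.~\ref{eq:simp} term-by-term (the right-hand sides are literally identical).

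First I would observe that any hypothesis sitting in beam $B_i$ has length exactly $i$, so $|B_{i,1}| = i$. Applying the definition $\tilde{\score}(\vecy) \defeq \score(\vecy) + r \cdot \min\{l, |\vecy|\}$ then gives
\[
\tilde{\score}(B_{i,1}) + r \cdot \max\{l-i, 0\}
= \score(B_{i,1}) + r \cdot \bigl(\min\{l, i\} + \max\{l-i, 0\}\bigr).
\]
The core step is then a two-case calculation on the sign of $l - i$: if $i \leq l$, then $\min\{l,i\} = i$ and $\max\{l-i, 0\} = l - i$, whose sum is $l$; if $i > l$, then $\min\{l,i\} = l$ and $\max\{l-i,0\} = 0$, whose sum is again $l$. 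Hence in either case the bracketed quantity equals $l$, and the LHS of Eq.~\ref{eq:comp} equals $\score(B_{i,1}) + r \cdot l$, which is the LHS of Eq.~\ref{eq:simp}. Since both inequalities have the same RHS $\tilde{\score}(\tilde{\bestuptoi})$, the two criteria hold under exactly the same conditions.

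There is no real obstacle here; the only mild subtlety is the length bookkeeping, namely remembering that $|B_{i,1}| = i$ (so the ``past reward'' is $r \cdot \min\{l,i\}$) and that the admissible future reward along any descendant is capped at $r \cdot \max\{l-i, 0\}$, because once the total length exceeds $l$ no further reward accrues. Once that is spelled out, the identity $\min\{l,i\} + \max\{l-i,0\} = l$ does all the work, and the simplified criterion Eq.~\ref{eq:simp} is seen to be not an approximation but a genuine equivalent reformulation that avoids recomputing $\tilde{\score}$ for every beam item at every step.
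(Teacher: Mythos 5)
Your proof is correct and is exactly the ``trivial math'' the paper alludes to: unfolding $\tilde{\score}$, using $|B_{i,1}|=i$, and applying the identity $\min\{l,i\}+\max\{l-i,0\}=l$ to collapse the left-hand side of Eq.~\ref{eq:comp} to that of Eq.~\ref{eq:simp}. The paper's own proof simply says ``trivial,'' so your write-up is the same argument, just made explicit.
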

\begin{proof}
trivial.
\end{proof}


\section{Experiments: Neural Translation}
\label{sec:exps}

\subsection{\!Data Preparation, Training, and Baselines}

\begin{table}
\centering
\scalebox{0.9}{
\begin{tabular}{l|rrrrrr}
        & sents & tokens & vocab. & w/ BPE \\
\hline
Chinese & 1M    &  28M   & 112k   & 18k\\
English & 1M    &  23M   &  93k   & 10k \\
\end{tabular}
}
\caption{Machine translation training set.\label{tab:train}}
\end{table}

\begin{figure*}
\begin{tabular}{cc}
\hspace{-0.1cm}\includegraphics[width=0.5\textwidth]{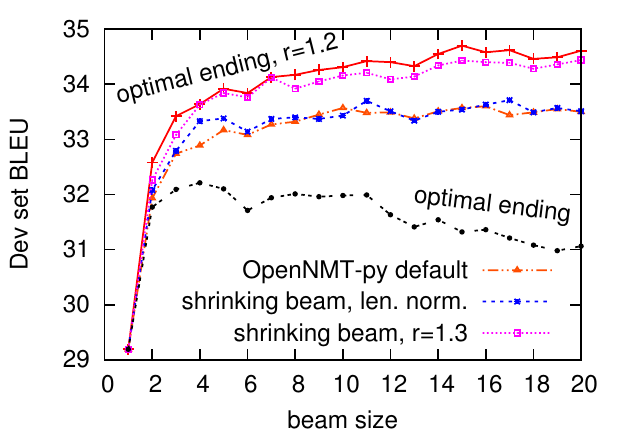} 
&
\hspace{-0.1cm}\includegraphics[width=0.5\textwidth]{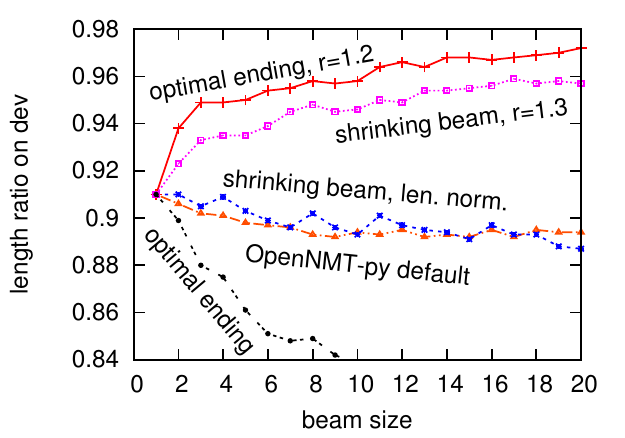} 
\\
(a) BLEU vs.~beam size 
&
(b) length ratio vs.~beam size
\end{tabular}
\caption{BLEU score and length ratio against beam size (on dev) of various beam search algorithms for neural machine translation.\label{fig:bleu-beam}}
\end{figure*}

We conduct experiments on Chinese-to-English neural machine translation,
using OpenNMT-py,\footnote{\small\url{https://github.com/opennmt/opennmt-py}}
the \pytorch port of the Lua-based OpenNMT~\cite{opennmt}.
We choose this library because \pytorch's combination of Python with Torch's dynamic computation graphs made it much easier to implement various search algorithms 
on it than on Theano-based implementations derived from \rnnsearch \cite{bahdanau+:2014}
(such as the widely used GroundHog\footnote{\small\url{https://github.com/lisa-groundhog/}} and Laulysta\footnote{\small\url{https://github.com/laulysta/nmt/}} codebases)
as well as the original LuaTorch version of OpenNMT.
We use 1M Chinese/English sentence pairs for training (see Table~\ref{tab:train} for statistics); we also trained on 2M sentence pairs and only saw a minor improvement so below we report results from 1M training.
To alleviate the vocabulary size issue we employ byte-pair encoding (BPE) \cite{sennrich+:2015}
which reduces the source and target language  vocabulary sizes to 18k and 10k, respectively;
we found BPE to significantly improve BLEU scores (by at least +2 BLEU) and reduce training time.
Following other papers on Chinese-English translation
such as \namecite{shen+:2016}, we use NIST 06 newswire portion (616 sentences) for development
and NIST 08 newswire portion (691 sentences) for testing;
we will report case-insensitive 4-reference BLEU-4 scores (using original segmentation).

Following OpenNMT-py's default settings, we train our NMT model for 20 epochs to minimize perplexity on the training set
(excluding 15\% sentences longer than 50 source tokens), with a batch size of 64, word embedding size of 500, and dropout rate of 0.3.
The total number of parameters is 29M. Training takes about an hour per epoch on Geforce 980 Ti GPU, 
and the model at epoch 15 reaches the lowest perplexity on the dev set (9.10) 
which is chosen as the model for testing.

On dev set, the default decoder of OpenNMT-py reaches 29.2 BLEU with beam size~1 (greedy)
and 33.2 BLEU with the default beam size of~5.
To put this in perspective, the most commonly used SMT toolkit Moses \cite{koehn+:2007} reaches 30.1 BLEU (with beam size 70) using the same 1M sentence training set
(trigram language model trained on the target side).
With 2.56M training sentence pairs, \namecite{shen+:2016} reported 32.7 BLEU on the same dev set using Moses
and 30.7 BLEU using the baseline \rnnsearch (GroundHog) with beam size 10 (without BPE, without length normalization or length reward).  
So our OpenNMT-py baseline is extremely competitve.

\subsection{Beam Search \& Bounded Length Reward}

\begin{table}
\scalebox{0.9}{
\begin{tabular}{p{1.35cm}R{0.55cm}R{0.55cm}R{0.55cm}R{0.55cm}R{0.55cm}R{0.55cm}R{0.55cm}}
reward $r$ & 0 & 1 & 1.1 & 1.2 & 1.3 & 1.4 & 1.5 \\
\hline
BLEU & 32.2 & 34.6 & 34.6 & {\bf 34.7} & 34.6 & 34.6 & 34.6    \\
len.~ratio & 0.88 & .95 & .96 & .97 & .98 & .98 & .99\\
best $b$ & 4 & 17 & 17 & 15 & 20 & 20 & 17 \\
\end{tabular}
}
\caption{Tuning length reward $r$ (with beam size $b$=1..20) for optimal bounded-reward beam search.
\label{tab:tune}}
\end{table}

We compare the following beam search variants:
\begin{enumerate}
\item OpenNMT-py's default beam search, finishing only when the top hypothesis in a step is completed (see Section~\ref{sec:prelim});
\item The ``shrinking beam'' method in \rnnsearch
  with two variants
  to encourage longer translations:
  \begin{enumerate}
    \item length normalization; 
      Google NMT \cite{googlenmt:2016} also adopted a similar mechanism.
    \item unbounded length reward (tuned on dev set) in Baidu NMT \cite{he+:2016}.
  \end{enumerate}
\item Our optimal-ending beam search (Section~\ref{sec:beam});
\item Our modified optimal-ending beam search for bounded length reward (Section~\ref{sec:length}).  
\end{enumerate}

Notice that length reward has no effect on both methods 1 and 2(a) above.
To tune the optimal length reward $r$ we run our modified optimal-ending beam search algorithm
with all combinations of $r=0, 0.5, 1, 1.1, 1.2, 1.3, 1.4$ with beam sizes $b=1\ldots 20$
on the dev set,
since different beam sizes might prefer different length rewards.
We found $r=1.2$ to be the best among all length rewards (see Table~\ref{tab:tune})
which is used in Figure~\ref{fig:bleu-beam} 
and $b=15$ is the best for $r=1.2$.

We can observe from Figure~\ref{fig:bleu-beam} that
(a) our optimal beam search with bounded length reward
performs the best, and at $b$=15 it is +5 BLEU better than $b$=1;
(b) pure optimal beam search degrades after $b$=4 due to
extremely short translations;
(c) both the shrinking beam method with length normalization 
and OpenNMT-py's default search alleviate the shortening problem,
but still produce very short translations (length ratio $\sim$0.9).
(d) the shrinking beam method with length reward works well,
but still 0.3 BLEU below our best method.
These are confirmed by the test set (Tab.~\ref{tab:final}).

\begin{table}
\hspace{-.2cm}
\begin{tabular}{r|rrrr}
decoder    & $b$ & dev & test\\
\hline
Moses & 70 & 30.14 & 29.41 \\
\hline
OpenNMT-py default & 16 & 33.60 & 29.75 \\
shrinking, len.~norm. & 17 & 33.71 & 30.11 \\
shrinking, reward $r$=1.3 & 15 & 34.42 & 30.37\\
\hline
optimal beam search, $r$=1.2 & 15 & {\bf 34.70} & {\bf 30.61} \\
\end{tabular}
\caption{Final BLEU scores on the test set (nist 08)
  using best settings from the dev set (nist 06).
\label{tab:final}
}
\end{table}


\section{Conclusions}
We have presented a beam search algorithm for neural sentence generation
that always returns optimal-score completed hypotheses.
To counter neural generation's natural tendancy for shorter hypotheses,
we introduced a {\em bounded length reward} mechanism which allows 
a modified version of our beam search algorithm to remain optimal.
Experiments on top of strong baselines have confirmed that our principled search algorithms (together with our bounded length reward mechanism) 
outperform existing beam search methods in terms of BLEU scores.
We will release our implementations (which will hopefully be merged into OpenNMT-py)
when this paper is published.
\footnote{While implementing our search algorithms
we also found and fixed an obscure but serious bug in OpenNMT-py's baseline beam search code (not related to discussions in this paper),
which boosts BLEU scores by about +0.7 in all cases.
We will release this fix as well.}

\section*{Acknowledgments}
{
We thank the anonymous reviewers from both EMNLP and WMT for helpful comments.
This work is supported in part by
NSF IIS-1656051,
DARPA N66001-17-2-4030 (XAI),
a Google Faculty Research Award,
and HP.
}

\bibliography{thesis}
\bibliographystyle{emnlp_natbib}

\end{document}